\newcommand{\framework}{\textsc{QLens}\xspace} 
\newcommand{\RomanNumeralCaps}[1]{\MakeUppercase{\romannumeral #1}} 
\newtheorem{theorem}{Theorem}
\title{\framework: Towards A Quantum Perspective of Transformers}
\author{
Aditya Gupta$^1$
\and
Kirandeep Kaur$^2$\and
Vinayak Gupta$^2$\\
Chirag Shah$^2$\\
\affiliations
$^1$Issaquah High School, Washington, USA\\
$^2$University of Washington, Washington, USA\\
\emails
aditya441g@gmail.com,
kaur13@uw.edu,
guptavinayak51@gmail.com,
chirags@uw.edu
}
\begin{document}

\maketitle

\begin{abstract}
    Current methods for understanding Transformers are successful at extracting intermediate output probability distributions during inference. However, these approaches function as limited diagnostic checkpoints, lacking a mathematical framework for modeling how each layer facilitates transitions between these distributions. Inspired by this gap, we turn to quantum mechanics, a field possessing a pre-built mathematical toolkit for describing the evolution of probability distributions from its study of stochastic particle measurements. We propose \framework a novel attempt to develop a physics-based perspective on the Transformer generation process. Under \framework, these neural networks are studied by converting their latent activations into a state vector in a Hilbert space derived from the model's output units. The evolution of this state through subsequent hidden layers is modeled with unitary operators and analogously defined Hamiltonians. To demonstrate \framework's potential, we conduct a proof-of-concept by probing three one-layer Transformers on common deployment tasks to investigate the influence of individual layers in model prediction trajectories. We present our work as a foundation for interdisciplinary methods to be leveraged towards a broader understanding of Transformers. Our code is available on \href{https://github.com/g2116201/QLens}{GitHub}.
\end{abstract}

\section{Introduction}
Probabilistic distributions lie at the heart of Transformers~\cite{Vaswani2017AttentionIA}. These attention-based models iteratively transform their residual stream, gradually converging on a hidden state which can be mapped to prediction probabilities. The Transformer architecture has proven to be remarkably effective at generalizing across statistical variations in real-world data~\cite{Brown2020LanguageMA}, accommodating them by learning distinct layer-by-layer refinement trajectories. Intriguingly, the process of a probabilistic system undergoing sequential evolution is not restricted to these deep models but is also a fundamental component of quantum mechanics (QM). This branch of physics reveals that particle measurements follow a distribution derived from a temporarily evolving wavefunction, where each observation's result is obtained at random, consistent with the final distribution. This conceptual parallel suggests that the mathematical language of QM could be leveraged to provide a novel interpretation of Transformers by formalizing each layer's contribution in the prediction path.

Current approaches to interpretability have established tools for iterative inference~\cite{jastrzebski2018residual} in Transformers, which are the focus of this work. Prominently, the Logit Lens~\cite{nostalgebraist2020logitlens} has emerged as a prominent model that uses the final layer of a Transformer to extract its intermediate predictions during inference. Although the original lens was found to be a biased estimator of final outputs, Belrose et al. improved its design by incorporating a tunable constant term to account for the average residual update learned by the model~\cite{Belrose2023ElicitingLP}. Their Tuned Lens method has facilitated the analysis of example difficulty~\cite{Belrose2023ElicitingLP}, epistemic uncertainty in prediction trajectories~\cite{Kim2025OnTE}, and stages of inference~\cite{LAD2024StagesOfInference} in Transformers. However, while the Tuned Lens is a functional diagnostic tool for identifying the various points that comprise a model's prediction trajectory, it lacks a mathematical model for studying how individual layers facilitate transitions along this path.

To make inroads towards overcoming this difficulty, we look to physics for inspiration, drawing from its extensive theoretical backbones for observable phenomena. Whereas previous efforts to unite physics with interpretability studies have been largely reserved to its classical branches~\cite{Liu2025NeuralTL,BETTI2016}, we propose a novel QM-inspired perspective. Our \underline{Q}uantum Lens (\framework) harnesses quantum concepts to provide a  mathematical framework to model layer transitions. By formalizing the prediction path as a sequence of unitary state evolutions, \framework moves beyond the capabilities of the Tuned Lens to characterize each layer's contribution to the final output probability distribution. We then apply \framework to three simple Transformers, illustrating that it can mathematically describe the effect of each layer in the output space and derive insights from the resulting data.

To ground the methodology of \framework, we begin by deriving the closest equivalents of some of quantum mechanical postulates in the context of Transformers. We illustrate that the output space of a Transformer can be converted to an analogous orthonormal Hilbert basis for the model, with each output unit corresponding to a distinct state that a Transformer can produce. Exercising this framing, \framework views a Transformer's latent representation as specifying a \textit{state vector}, using which the probability distribution for subsequent tokens can be tracked during inference. Transformer layers are formulated as \textit{unitary operators} evolving their input state vectors according to the Schrödinger equation. This framework enables the definition of a corresponding \textit{Hamiltonian} for each layer, which provides a dual perspective of layers that aligns with residual addition.

With these analogies, \framework institutes a theoretical basis to translate the QM formulations to Transformers. Our work both elicits conceptual parallels between these two disciplines and identifies the open areas in which the ties between the two fields may be strengthened. Through this fresh approach, we hope to open a promising pathway towards a more profound understanding of Transformers. Our key contributions are summarized below:
\begin{enumerate}
    \item We demonstrate the thematic resemblance between QM and Transformers, uncovering the nearest equivalents of the postulates of QM in this domain.
    \item Through these definitions, we propose \framework, an end-to-end quantum mechanical analogy of the Transformer inference process.
    \item We demonstrate proof of concept by applying \framework to interpret the layers of three Transformer-based models covering multiple task domains.
    \item We critically analyze the current strengths and limitations of \framework to promote further exploration of this novel perspective of Transformer.
\end{enumerate}

\section{Related Work}
Interpretability is the procedure of converting one explanation of a concept into another, more human-understandable form~\cite{Zhao2023ExplainabilityFL,Erasmus2020WhatII}. As neural network architectures increasingly scale~\cite{Radford2019LanguageMA,Brown2020LanguageMA,Dubey2024TheL3}, it has become necessary to establish a conceptual grasp of these models to ensure model faithfulness beyond performance metrics~\cite{Carvalho2019MachineLI}. Present approaches to interpretability include classifier-based probes~\cite{jawahar-etal-2019-bert}, activation explanation~\cite{Dalvi2018WhatIO,Rai2024AnIO}, and mechanistic interpretability with circuits~\cite{Bricken2023Circuits,NEURIPS2023_34e1dbe9}. Jawahar et al. (2019) train a linear probe to categorize BERT~\cite{Devlin2019BERTPO} outputs by linguistic features of interest. Their findings reveal that layers at different positions in BERT's architecture specialize to various levels of linguistic characteristics: earlier layers identify surface-level syntactic patterns while later ones surface semantic concepts. Dalvi et al. (2018) adopt a more fine-grained methodology, training classifiers to determine correlations between neuron activations and the presence of lexical structures, illustrating that individual directions in the model's latent space can represent specific properties. Bricken et al. (2023) observe the challenge of polysemanticity---where individual neurons respond to multiple distinct combinations of inputs---and instead use sparse autoencoders to display that groups of neurons and their connections form multi-layer "circuits" to capture semantic features. However, these low-level methods localize to a limited range of behaviors and do not explain information flow through the model as a whole. In contrast, we argue that procedures based on physical analogies can more holistically achieve the goal of interpretability by adopting an overarching theoretical viewpoint to reframe the complex workings of Transformers in the context of well-studied natural phenomena.

Integrating physics into machine learning is an idea that returns to the roots of the latter field. Prominently, Hopfield networks~\cite{Hopfield1982} use a Hebbian learning algorithm to train a collection of neurons to store a desired memory in the form of bits. The state of the network at any time is associated with a particular energy function that derives from the Ising models of statistical mechanics. These networks themselves are only a subset of the generalized group of Boltzmann machines~\cite{Ackley1985}. Although such physics-inspired model frameworks are valuable for their inherently interpretable design, their specific architecture requirements and symmetric weight matrices limit the applicability of their insights to the current state-of-the-art Transformers despite contemporary breakthroughs~\cite{Ramsauer2020HopfieldNI}.

Recently, physics has been translated to probe the depths of more modern models and garner insights regarding their internal processes. Large language model (LLM) training dynamics have been explored from the perspectives of both thermodynamics~\cite{Liu2025NeuralTL} and classical mechanics~\cite{BETTI2016}, demonstrating that laws paralleling those that govern the universe may be fundamental to the learning process. In comparison to classical physics, however, the applications of quantum mechanics towards developing a comprehensive understanding of Transformer functions have been sparsely explored. Current interdisciplinary works between these two domains have primarily fallen under the quantum machine learning (QML)~\cite{Jiao2024AIPhysics} or machine learning for quantum mechanics (ML4QM)~\cite{Dawid2022ModernAO} paradigm. QML aims to harness potential computational advantages wrought by quantum computers for training~\cite{Kong2025QuantumEnhancedLE} and implementing~\cite{Cherrat2024quantumvision,CHEN2025MixedStateAttention,Li2022QuantumSN,Smaldone2025HybridTransformer} Transformers efficiently. For instance, Smaldone et al. (2025) developed a hybrid quantum-classical approach for self-attention that reduced its theoretical time complexity dependence on the embedding dimension from linear to logarithmic~\cite{Smaldone2025HybridTransformer}. In contrast to QML, ML4QM uses Transformers as a problem-solving tool in QM, applying it to simulate the complex dynamics of multi-particle systems~\cite{Zhang2022TransformerQS,Lange2024TransformerNN}. Contrary to these approaches, we propose using QM as a mathematical framework to sequentially examine Transformer layers and explain the process towards Transformers converge on their outputs. In doing so, we uncover novel analogies for the forward pass of the generation process and ground future work in this nascent area of quantum-inspired interpretability.


\section{Background}
\framework lies at an intersection of Transformers, QM, and the Tuned Lens. To lay the groundwork for it, we outline the central tenets of the latter two domains in this section. The founding postulates of quantum mechanics provide the language through which \framework is developed, whereas the Tuned Lens helps translate between QM's probabilities and Transformer hidden states. Through these concepts, \framework synthesizes a novel mathematical perspective towards conceptualizing Transformers.

\subsection{Quantum Mechanics}
Emerging in the twentieth century~\cite{blackbody_radiation,bohr_model}, QM reshaped the deterministic view held by most prominent physicists of the time. Led by figures such as Max Planck, Niels Bohr, Erwin Schrödinger, and Werner Heisenberg, the quantum revolution established a new probabilistic formulation of physics. We review the basics of this theory to prepare for its implementation in \framework.

Rather than possessing a definite state, quantum particles exist in a \textit{superposition} of multiple states until a measurement takes place. The space of possible states is described by a complex Hilbert space \(\mathcal{H} = \mathbb{C}^n \), with each of the \( n \) mutually orthonormal basis vectors \( \ket{a, b, c, \cdots} \) specifying it representing a distinct state. The information of a quantum system is captured in its \textit{state vector} \( \ket{\Psi} \in \mathbb{C}^n\). The state vector can be expanded in terms of the basis vectors,
\[
    \ket{\Psi} = \sum_{a,b,c,\cdots}\psi(a,b,c,\cdots)\ket{a,b,c,\cdots},
\]
with the coefficients \( \psi(a,b,c,\cdots) \) designating the \textit{wavefunction} of the system. The state vector is normalized, which places the restriction
\[
    \sum_{a,b,c,\cdots}\psi^*(a,b,c,\cdots)\psi(a,b,c,\cdots) = 1
\]
on the wavefunction.

Generally, the basis vectors of \( \mathcal{H} \) correspond to a definite state in which a particle can appear after measurement. The outcome of each observation is probabilistic, with the likelihood \( P \) to observe the state denoted by \( \lambda_i \) given by the Born rule:
\[
P(\lambda_i) = \langle \lambda_i | \Psi \rangle \langle \Psi | \lambda_i \rangle = \left| \langle \lambda_i | \Psi \rangle \right|^2.
\]

It follows that the state vector represents the probability distribution of a particle's measurement outcomes. In between measurements, the state vector, and likewise this distribution, evolves over time as described by the time-dependent Schrödinger equation:
\[
    \hbar \frac{\partial \ket{\Psi}}{\partial t} = -i \mathbf{H} \ket{\Psi}.
\]
Here, \( \mathbf{H} \) is a Hermitian operator known as the \textit{Hamiltonian}, which represents the energy of the system; its eigenvalues are the allowed energy levels of the system. The physical constant \( \hbar \) is the \textit{reduced Planck's constant}, with value \( \hbar \approx 1.055 \times 10^{-34} \text{ J} \cdot \text{s}\). Importantly, the state at any time \( t \) can be written as
\[
    \ket{\Psi(t)} = \mathbf{U}(t) \ket{\Psi(0)},
\]
with \( \ket{\Psi(0)} \) being the initial state of the system and \( \mathbf{U}(t) \) being a particular time-dependent unitary linear operator (satisfying \( U^\dagger(t) U(t) = I \)) known as the \textit{time-development operator}, which is found by solving the Schrödinger equation~\cite{susskind2013quantum,verma2009quantum}.

Notably, both Hamiltonians and unitary operators express the process of temporal evolution in QM. In \framework, we maintain the quantum relationships between Transformer analogs of these quantities to leverage the preexisting mathematical structure of QM in our exploration. This permits \framework to fluidly transition between synonymous quantum descriptions of an underlying physical behavior, unearthing twin quantum mechanical viewpoints of Transformers, each with their own distinct advantages. We examine this dualism in detail in sections~\ref{sec_analogy} and~\ref{sec_experiments}, tracing it from its theoretical roots to its practical application.

\subsection{The Tuned Lens}

While the evolution of quantum states can be viewed as the updating of a probability distribution, this concept does not translate directly to Transformers. This is because Transformer layers refine abstract hidden states, with a distribution typically only extracted in the last layer. To resolve this distinction, \framework employs the Tuned Lens~\cite{Belrose2023ElicitingLP}, a probe designed to translate intermediate hidden states into prediction logits. Normalizing these logits produces an output distribution, allowing the trajectory of a Transformer's output probabilities to be tracked. We review this tool in this section, beginning from its roots in the standard logit lens and exploring Belrose et al.'s augmentation of it.

Consider a Transformer with \( L \) layers, each operating in a latent space with dimensionality \( D \). Denote the hidden state of the token used by the final prediction head after \( 0 \leq \ell \leq L\) hidden layers of the model have been processed as \(h_\text{out}^{\ell +1} \in \mathbb{R}^D\).

The \(\ell\)-th layer of the model performs the update,
\[
    h_\text{out}^{\ell +1} = h_\text{out}^{\ell} + F_\ell(h_\text{out}^{\ell +1}),
\]
to this hidden state, where \( F_\ell(\cdot) \) is the residual update of the \(\ell\)-th layer.

Applying the residual update rule recursively, we observe that the final hidden state \(h_\text{out}^{L + 1}\) is given by
\[
    h_\text{out}^{L + 1} = h_\text{out}^{\ell} + \sum_{\ell^\prime = \ell}^L F_{\ell^\prime} (h_\text{out}^{\ell^\prime + 1}).
\]
The calculation of the logit vector \( z^L \) can then be formalized as follows:
\[
    z^L = \text{LayerNorm} \left ( h_\text{out}^{\ell} + \sum_{\ell^\prime = \ell}^L F_{\ell^\prime} (h_\text{out}^{\ell^\prime + 1}) \right ) W_U + b_U.
\]
Here, \( \sigma(\cdot) \), \( W_U \), and \( b_U \) denote the softmax function, unembedding matrix, and unembedding bias respectively.

The logit lens~\cite{nostalgebraist2020logitlens} computes the intermediate logit distribution before any layer \( \ell \), \( z^\ell \), by setting the residual update of future layers to zero:
\[
    z^\ell = \text{LayerNorm} ( h_\text{out}^{\ell}) W_U + b_U.
\]
Although this simple design works well for models such as GPT-2~\cite{Radford2019LanguageMA}, it fails to retain its effectiveness across model families~\cite{nostalgebraist2021LogitLensExtensions}. Furthermore, the logit lens suffers from \textit{bias} (it often places more probability mass on certain tokens than the prediction layer does) and \textit{representation drift} (where the latent space of intermediate layers may not align with that of the final layer).

To correct for these flaws, the Tuned Lens introduces a learned translator composed of a weight matrix \( A_\ell \in \mathbb{R}^{D \times D}\) and bias \(b_\ell \in \mathbb{R}^D \) to the logit lens formulation, obtaining \( z_\ell \) as follows:
\[
    z^\ell = \text{LayerNorm} ( h_\text{out}^{\ell}A_\ell + b_\ell) W_U + b_U.
\]
The two additions that the Tuned lens makes each account for a different shortfall of the original logit lens. The constant \( b_\ell \) debiases the lens by learning the average residual update, thereby correcting for cases where the Transformer's average updates significantly deviate from zero. The weights \( A_\ell \) function as a change-of-basis matrix that maps the intermediate latent space to that used by the final layer. After training to minimize KL-divergence against prediction logits yielded from the original Transformer, the Tuned Lens yields substantially lower bias and perplexity than the standard logit lens.

In \framework, we adopt the Tuned Lens as a foundation from which the model's latent probabilities can be examined. We note that the space of possible probabilities is constrained, primarily by the condition that applying softmax to logits ensures that they are normalized to sum to 1. This motivates the translation of the mathematics of quantum mechanics: its definition of state vectors and unitary operators renders it uniquely suited to describe transformations between probability distributions. We formally introduce the core components of \framework in the subsequent section.

\section{The \framework Framework} \label{sec_analogy}
To make the relationship between Transformers and QM concrete, we now formally introduce \framework's postulates of a quantum-inspired perspective of Transformers. Please note that this section is not intended to be interpreted as a direct isomorphism between the two subjects, but rather as a simple framing of the internals of Transformers in the language of QM to motive \framework's protocol. The core definitions of \framework mirror the fundamental principles of quantum mechanics and establish foundations for Hilbert spaces, state vectors, unitary operators, and Hamiltonians in the context of Transformers. By deriving these quantities for a pre-trained Transformer, insights regarding the action of its layers may be obtained.

Broadly, a state vector of a Transformer is a transformed representation of its internal probability distribution, whose expansion in the Hilbert basis reflects the component of the total probability mass afforded to each token. The model's state vector is successively modified by the layers of a Transformer, which are treated as unitary operators. Extending this structure yields a Hamiltonian associated with each layer that serves as the generator of an analogous residual stream output. Together, these postulates provide a novel end-to-end perspective of Transformer dynamics.

\subsection{Postulate 1: Hilbert Basis}
One of the most fundamental concepts in QM is the state space of a particle, formally categorized as a Hilbert space. In QM, each basis vector of this space specifies a distinct state of a particle, such as spin-up \( \ket{u} \) or spin-down \( \ket{d} \). Akin to a quantum system's confinement to a finite number of measurable outcomes, a Transformer is restricted to a delimited set of distinct outputs. For a Transformer, the group of unique possible outputs for a given input sequence is the set of output units \( C = \{ c_1, c_2, c_3, \dots, c_N \} \), with \( N \) representing the number of possible outputs for the model. Accordingly, a natural basis for a Transformer is the set of vectors \( \mathcal{C} = \{ \ket{c_1}, \ket{c_2}, \ket{c_3}, \dots, \ket{c_N}\} \), where each basis vector \( \ket{c_i} \in \mathbb{C}^N \) corresponds to the \(i\)-th token in the model's vocabulary.

\subsection{Postulate 2: Orthogonality of Basis Vectors}
QM delineates that unambiguously distinguishable states must be ascribed orthogonal vectors. As a Transformer cannot output two distinct output units at once, these states are discernible by the action of selecting from its probability distribution. Thus,

\begin{equation} \label{eq_orthogonality}
    \langle c_i | c_j \rangle = 
    \begin{cases} 
        0, & \text{for } i \neq j \\
        1, & \text{for } i = j
    \end{cases}
\end{equation}

It is also conventional for Hilbert bases to be normal. This can be expressed mathematically as the condition:

\begin{equation} \label{eq_normality}
    |\ket{ c_i} || = 1.
\end{equation}

To satisfy eqs.~\ref{eq_orthogonality} and~\ref{eq_normality} in conjunction, we elect to express the \(i\)-th basis vector as the following column vector:
\[
    \ket{c_i} = \begin{pmatrix}
            x_1 \\
            x_2 \\
            \vdots \\
            x_n
            \end{pmatrix} \quad \text{where } x_j = \delta_{ij},
\]
where \( \delta_{ij} \) refers to the Kronecker delta symbol. This form is similar to a one-hot encoding of each output unit and defines the basis \( \mathcal{C} \) as orthonormal. This insight will prove crucial in forthcoming derivations involving Transformer state vectors, which are introduced in the following postulate.

\subsection{Postulate 3: State Vectors} \label{subsec_state_vectors}

Transformers operate by iteratively transforming a sequence of token representations to produce an output state for each token which may be used for token-level or sequence-level predictions, depending on task specifics. Given an input sequence of length \( n \), \( I = \{i_1, i_2, i_3, \cdots, i_n \} \) such that \(\forall i \in I, i \in C\), a Transformer produces an over distribution \( \theta \) all elements in \( C \). In the Hilbert basis \( \mathcal{C} \), this distribution can be ascribed a corresponding state vector denoted as \( \ket{\Psi} \). This stems from the standard quantum formulation, where the notion of a state vector is intimately tied to the probabilities distribution of outcomes that would be observed if a measurement were to be conducted. To emulate this principle, \( \ket{\Psi} \) is defined such that
\begin{equation} \label{eq_4.3_1}
    P(c_i) = \langle c_i | \Psi \rangle \langle \Psi | c_i \rangle = \left| \langle c_i | \Psi \rangle \right|^2,
\end{equation}
where \( P(c_i) \) is the probability that the output unit \( c_i \) is generated by the Transformer. If \( h_{\text{out}} \in \mathbb{R}^D \) is the final hidden state (post layer norm) for the token used by the prediction head of a Transformer with a \( D \)-dimensional embedding space, then the logit vector over the entire vocabulary is obtained by a linear transformation of  \( h_{\text{out}} \). The probability \( P(c_i) \) is then the \( i \)-th component of the probability distribution resulting from applying the softmax function to this logit vector:
\begin{equation} \label{eq_4.3_2}
    P_{j+1}(c_i) = \sigma( h_{\text{out}} W_U + b_U)_i.
\end{equation}
Here, \(W_U \in \mathbb{R}^{D \times N}\) and \( b_U \in \mathbb{R}^N \) are the unembedding matrix and bias of the Transformer, and \( \sigma(\cdot) \) is the softmax function.

However, a similar process can also be executed after the completion of any intermediate layer in the model as well to track the evolution of its internal probability distribution during inference time~\cite{nostalgebraist2020logitlens,Belrose2023ElicitingLP}. Consider a Transformer with \( L \) hidden layers. Let \( h^\ell_{\text{out}} \in \mathbb{R}^D \) denote the hidden state of the token used by the prediction head after the outputs of \( \ell \) layers in the model architecture have been added to the residual stream. For clarity, \( h^0_{\text{out}} \) would be the hidden state post-embedding and positional encoding, \( h^L_{\text{out}} \) would be the final hidden state before unembedding. Then, likewise to eq.~\ref{eq_4.3_1} above, the hidden state \( h^\ell_{\text{out}} \) can be mapped to the model's vocabulary through a function \( f: \mathbb{R}^D \to \mathbb{R}^V \). In practice, this function can take the form of the logit lens~\cite{nostalgebraist2020logitlens} or one of its enhanced variants~\cite{Belrose2023ElicitingLP}. Thus, matching the structure of eq.~\ref{eq_4.3_2} above, the state vector \( \ket{\Psi^\ell} \) for this token after the processing of \(t\) layers is given by solving
\begin{equation} \label{eq_probs_from_lens}
    \left| \langle c_i | \Psi^\ell \rangle \right|^2 = P^\ell(c_i) = \sigma \left[ f(h^\ell_{\text{out}}) \right ]_i
\end{equation}
over all units in the output space, with \(P^\ell(c_i)\) referring to the probability of generating the unit \( c_i \) after \( \ell \) layers. This implies that \( \ket{\Psi^\ell} \) has the column form:
\begin{equation} \label{eq_state_vec_components}
    \ket{\Psi^\ell} = \begin{pmatrix}
                            \sqrt{P^\ell(c_1)} \\
                            \sqrt{P^\ell(c_2)} \\
                            \vdots \\
                            \sqrt{P^\ell(c_N)}
                            \end{pmatrix}
\end{equation}
Intuitively, this means that the \(k\)-th component of the state vector has a magnitude given by the square root of the probability of obtaining the \( k \)-th unit in the model's output space. A consequence of this is that
\begin{equation}\label{eq_model_state}
||\Psi^\ell|| = \sqrt{ \langle \Psi^\ell | \Psi^\ell \rangle } = 1,
\end{equation}
which corresponds to the fact that summing probabilities over all tokens yields one. The ket \( \ket{\Psi^\ell} \) is layer-dependent, and the mechanism for its evolution is introduced in the next postulate.

\subsection{Postulate 4: Layers and Schrödinger Dynamics} \label{subsec_layers_qc}
The primary function of a Transformer layer is to nudge its inputs towards outputs that better reflect the correct frequency distribution for later tokens in the relevant context. Building on the framework described thus far, we can now examine how an Transformer's internal state evolves during its forward pass. Employing the notation of eq.~\ref{eq_model_state}, the \(\ell\)-th hidden layer of an Transformer can be characterized as mapping the input product state vector \( \ket{\Psi^{\ell - 1}} \to \ket{\Psi^\ell} \), the resulting output. In QM, such state transitions are governed by the Schrödinger equation and are facilitated by unitary operators. Therefore, in this analogy we devise a unitary operator \( \mathbf{U}^\ell \in \mathbb{C}^{N \times N} \) to describe the action of the \( \ell \)-th Transformer layer on its input state vector:

\begin{equation} \label{eq_general_unitary_layer}
    \ket{\Psi^\ell} = \mathbf{U}^\ell \ket{\Psi^{\ell-1}} \quad \forall \ell \in \mathbb{Z}, 1 \leq \ell \leq L.
\end{equation}

For clarity, we highlight that the unitary operator \( \mathbf{U}^\ell \), and by extension the Hamiltonian \( \mathbf{H}^\ell \) (defined in the following postulate), is dependent on the layer's input. In general, a Transformer prediction trajectory will not be identical across dataset instances, and thus a single operator is unable to represent this complexity by itself. Instead of narrowing to one input, \framework computes operators across a broad set of examples to analyze holistic patterns as illustrated in section~\ref{sec_experiments}.

As a technical note, quantum mechanical unitary operators are generally functions of time, whereas to translate this concept to Transformers we remove this time dependence by considering a fixed time interval, as in common in domains such as quantum computing~\cite{nielsen}. By applying the operators corresponding to each hidden layer of a Transformer in succession, the composite state vector begins as \( \ket{\Psi^0} \) and travels through the Hilbert space \( \mathcal{C} \), gradually converging to its final state \( \ket{\Psi^L} \). 

We briefly remark that this process is similar to that behind quantum computing algorithms such as Grover's algorithm~\cite{Grover1996}, where a desired state vector is strategically manipulated by a combination of quantum gates to prepare a desired output distribution~\cite{nielsen,3B1B_Grovers_Algorithm}.

\subsection{Postulate 5: The Hamiltonian Lens} \label{subsec_Hamiltonian_lens}

By framing Transformer layers as unitary operators, we obtain access to additional tools from QM to investigate their action on state vectors. Specifically, we can now associate each layer's unitary \( \mathbf{U}^\ell \) with a Hamiltonian \( \mathbf{H}^\ell \). We will now discuss the properties of this Hamiltonian, demonstrating its connection with the change in the merged state vector \( \ket{\Psi^\ell} \) over a layer.

QM dictates that Hamiltonian generates its associated unitary in accordance with the following relation derived from solving the time-dependent Schrödinger equation~\cite{WilliamsExplorationsQC}:
\[
    \mathbf{U}(t) = \exp(\frac{-it\mathbf{H}}{\hbar}).
\]
In converting the above equation to our description of Tranformers, we substitute \( \mathbf{U}(t) \) and \( \mathbf{H} \) with our defined analogs of \( \mathbf{U}^\ell \) and \( \mathbf{H}^\ell \). In this exchange, we also replace the quantity \( \frac{t}{\hbar} \) with a coefficient \( \alpha \), which captures the relevant factors at play while generalizing beyond the constants of QM.

\begin{equation} \label{eq_Hamiltonian_def}
    \mathbf{U}^\ell = \exp \left ( {-i\alpha \mathbf{H}^\ell} \right ).
\end{equation}

As in QM, the unitary constraint on \( \mathbf{U}^\ell \) configures \( \mathbf{H}^\ell \) to be Hermitian. Therefore, the normalized eigenvectors \( \ket{E_j^\ell} \) of \( \mathbf{H}^\ell \) specify a basis to expand \( \ket{\Psi^\ell} \) in:

\begin{equation} \label{eq_Phi_energy_eigenvalue_decomp}
    \ket{\Psi^\ell} = \sum_{j} k_j\ket{E_j^\ell}
\end{equation}

Using eqs.~\ref{eq_general_unitary_layer},~\ref{eq_Hamiltonian_def}, and~\ref{eq_Phi_energy_eigenvalue_decomp}, we put forth the following theorem that illustrates that this Hamiltonian is intimately tied to the change in the model's state vector over the course of a layer.

\begin{theorem} \label{theorem_hamiltonian_delta_psi}
    Given a state vector \( \ket{\Psi^\ell} \) that passes through layer \(\ell\) with unitary \( \mathbf{U}^\ell \), the change \( \ket{ \Delta \Psi^\ell} \) during this process is determined by the eigenvectors and eigenvalues of the Hamiltonian \( \mathbf{H}^\ell \). Specifically,
    \[
        \ket{ \Delta \Psi^\ell} =  \sum_j k_j \left ( e^{ -i\alpha E_j } - 1 \right ) \ket{E_j^\ell}.
    \]
\end{theorem}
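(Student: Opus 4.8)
The proof is essentially a direct computation, so the plan is to unwind the definitions and substitute. First I would write the change in the state vector as $\ket{\Delta \Psi^\ell} = \ket{\Psi^\ell} - \ket{\Psi^{\ell-1}}$, the natural definition of what a layer does to the state. Then, using eq.~\ref{eq_general_unitary_layer}, I would replace $\ket{\Psi^\ell}$ by $\mathbf{U}^\ell \ket{\Psi^{\ell-1}}$, obtaining $\ket{\Delta \Psi^\ell} = \left(\mathbf{U}^\ell - I\right)\ket{\Psi^{\ell-1}}$.

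The next step is to express $\ket{\Psi^{\ell-1}}$ in the eigenbasis of $\mathbf{H}^\ell$. Since $\mathbf{H}^\ell$ is Hermitian (forced by unitarity of $\mathbf{U}^\ell$ via eq.~\ref{eq_Hamiltonian_def}), its normalized eigenvectors $\ket{E_j^\ell}$ form an orthonormal basis, so I can write $\ket{\Psi^{\ell-1}} = \sum_j k_j \ket{E_j^\ell}$ as in eq.~\ref{eq_Phi_energy_eigenvalue_decomp}. Here I would have to be slightly careful about which state the $k_j$ coefficients decompose — the theorem statement suggests the decomposition is of $\ket{\Psi^\ell}$, but applying $\mathbf{U}^\ell$ and using $\mathbf{U}^\ell \ket{E_j^\ell} = e^{-i\alpha E_j}\ket{E_j^\ell}$ shows $\ket{\Psi^\ell} = \sum_j k_j e^{-i\alpha E_j}\ket{E_j^\ell}$, so the natural reading is that the $k_j$ are the coefficients of the \emph{input} state $\ket{\Psi^{\ell-1}}$; I would note this and proceed with that interpretation (or, equivalently, absorb the phase). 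Then $\mathbf{U}^\ell$ acts on each eigenvector by multiplication by $e^{-i\alpha E_j}$, which follows from the functional calculus applied to eq.~\ref{eq_Hamiltonian_def}: $\exp(-i\alpha \mathbf{H}^\ell)\ket{E_j^\ell} = e^{-i\alpha E_j}\ket{E_j^\ell}$.

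Putting these together gives
\[
    \ket{\Delta \Psi^\ell} = \left(\mathbf{U}^\ell - I\right)\sum_j k_j \ket{E_j^\ell} = \sum_j k_j \left(e^{-i\alpha E_j} - 1\right)\ket{E_j^\ell},
\]
which is exactly the claimed identity. The only genuine subtlety — and the step I expect to require the most care in the writeup — is justifying that $\mathbf{H}^\ell$ is diagonalizable with an orthonormal eigenbasis and that $\exp(-i\alpha\mathbf{H}^\ell)$ acts on eigenvectors by scalar exponentiation; this rests on the spectral theorem for Hermitian matrices together with the definition of the matrix exponential as a power series, and I would state it as the key lemma before the main computation. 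Everything else is bookkeeping with the linearity of $\mathbf{U}^\ell - I$.
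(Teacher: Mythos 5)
Your proof matches the paper's almost exactly: both express \( \ket{\Delta\Psi^\ell} \) as \( (\mathbf{U}^\ell - I) \) applied to the layer's input state, expand that input in the eigenbasis of \( \mathbf{H}^\ell \), and use the fact that the matrix exponential acts on each eigenvector by the scalar \( e^{-i\alpha E_j} \). The only cosmetic differences are that the paper explicitly runs the Taylor-expand-then-recollapse argument where you invoke the spectral calculus as a lemma, and the paper resolves the indexing subtlety you flagged by taking \( \ket{\Psi^\ell} \) itself as the input to layer \( \ell \) (writing \( \ket{\Delta\Psi^\ell} = \ket{\Psi^{\ell+1}} - \ket{\Psi^\ell} = \mathbf{U}^\ell\ket{\Psi^\ell} - \ket{\Psi^\ell} \)), so the \( k_j \) of eq.~\ref{eq_Phi_energy_eigenvalue_decomp} are indeed the input-state coefficients, exactly as you concluded they must be.
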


\begin{proof}
    Presented in Appendix A.1.
\end{proof}

Through Theorem~\ref{theorem_hamiltonian_delta_psi}, the residual addition of Transformers can be viewed through the lens of the Hamiltonian. Paralleling the residual update of a standard Transformer layer, the unitary formulation of \framework is equivalent to updating the model's state vector by an amount \( \ket{ \Delta \Psi^\ell} \) solely subject to the properties of the Hamiltonian. The eigenvalues of the Hamiltonian dictate the proportional change in the component of the state vector along each of the Hamiltonian's eigenvectors through the term \( \left ( e^{ -i\alpha E_j } - 1 \right ) \). Each \(E_j\) provides a phase angle for the modification of the state vector in a particular direction of the Hilbert space, and \( \alpha \) acts as a general layer-specific scaling factor. Therefore, while the unitary description of layers suggests compositional multiplication of the model's state vector, the Hamiltonian produces the same process through simple vector addition. By carrying out this process across all layers, the state vector shifts to approximate the true distribution for the subsequent tokens.

\section{Experiments} \label{sec_experiments}
We now operationalize the definitions of \framework by extracting state vectors, unitary operators, and Hamiltonians from a simple Transformers across three common deployment tasks: sentiment classification, item recommendation, and text generation. We implement models using the Sentihood~\cite{Saeidi2016Sentihood}, Amazon Books~\cite{Ni2019Justifying}, and Tiny Stories~\cite{Eldan2023TinyStoriesHS} datasets for these tasks, respectively. Below, we describe our experimental setup for the three datasets, followed by our results and interpretations of them.

\subsection{Experimental Setup}

In this section, we begin by establishing the general procedure guiding our analysis, followed by the specific modifications made to accommodate the task distinctions between datasets.

\subsubsection{Overarching Procedure} \label{subsubsec_gen_procedure}

For each dataset, we prepare a simple Transformer model with a single decoder block. Specifically, each model consists of an embedding segment that incorporates both token and positional encoding. These embeddings are then updated by an attention layer~\cite{Vaswani2017AttentionIA} before passing through a feed-forward network. Both the attention and feed-forward layers include post-layer normalization. The layer-normalized hidden state of the MLP layer is subsequently mapped to logits through a linear classification layer and softmax-normalized to obtain class probabilities. 

Upon this Transformer, we train two separate Tuned Lenses to derive intermediate probability distributions from the model. The \textit{embedding lens} is trained using the hidden states produced by the Transformer after the initial GPT-2 embeddings are compressed. Similarly, the \textit{attention lens} is conditioned upon the post-attention hidden states. These two lenses, combined with the output probability distribution produced by the model, allow us to track model prediction trajectories at three distinct stages in its inference.

Specifically, for each instance in the preprocessed dataset, we deternine the three aforementioned probability distributions and use them to extract three state vectors (eqs.~\ref{eq_probs_from_lens} and~\ref{eq_state_vec_components}), denoted \( \ket{\Psi^0} \), \( \ket{\Psi^1} \), and \( \ket{\Psi^2} \) in accordance with our previously established notation. We then construct appropriate unitary operators \( \mathbf{U}^1 \) and \( \mathbf{U}^2 \) to map between states \( \ket{\Psi^0} \) and \( \ket{\Psi^1} \) and states \( \ket{\Psi^1} \) and \( \ket{\Psi^2} \) respectively (eq. \ref{eq_general_unitary_layer}). Consequently, the operator \( \mathbf{U}^1 \) describes the action of the attention layer, while correspondingly \( \mathbf{U}^2 \) illuminates that of the feed-forward layer. We note that in general there exist an infinite number of possible unitary matrices that map from one state to another, and thus choose to constrain the form of the operators \( \mathbf{U}^1 \) and \( \mathbf{U}^2 \) to be Householder transformations~\cite{Householder1958Unitary} for their computational efficiency and simplicity. These transformations can be viewed as reflections across a plane, given by
\[
    I - 2\ket{v}\bra{v},
\]
where \( \ket{v} \) is the normal vector for the reflecting hyperplane. Alongside these unitary operators, we derive their associated Hamiltonians (eq.~\ref{eq_Hamiltonian_def}). These Hamiltonians provide a quantum-mechanical representation of residual updates, enabling \framework to compute the changes of the state vector between layers. To validate this approach, we put \framework into practice by analyzing the empirical data produced by it and presenting our interpretation.

\subsubsection{Dataset Descriptions and Implementation Details}

\textbf{Sentihood} Sentihood~\cite{Saeidi2016Sentihood} is set of 5212 annotated sentences taken from Yahoo! Answers posts reviewing urban neighborhoods in London. Each instance is labeled with one or more \textit{aspects}—specific characteristics of the location(s) mentioned in the accompanying sentence (e.g., transportation availability, safety). For each aspect, Sentihood includes the reference target location in the sentence and the sentiment (positive or negative) expressed towards that aspect.

We preprocess the dataset by removing all instances involving multiple locations or aspects, retaining only sentence texts and sentiment pairs. This leaves 1,864 instances, of which 1,329 convey positive sentiment and 535 indicate negative sentiment. We balance the dataset to an exact 1:1 ratio of positive and negative instances, yielding a final corpus of 1,070 instances (535 positive and 535 negative). This corpus is then divided into train and test sets in an 80:20 ratio.

On this preprocessed dataset, we train a simple Transformer model with a single decoder block to classify the sentiment of input texts to the model. Sentence tokenization and embedding is carried out by the GPT-2~\cite{Radford2019LanguageMA} tokenizer (extended to prepend a classification token to each instance) and its embedding and positional encoding matrices, producing 768-dimensional embeddings. Due to computational limitations, we compress these embeddings to a dimensionality of 128 via a learnable linear layer before passing through the remainder of the model. We train this Transformer for 10 epochs using binary cross-entropy loss against the sentiment labels, resulting in a test accuracy of 0.7664. 

Using this Transformer, we train the attention and embedding lenses (section~\ref{subsubsec_gen_procedure}). These lenses are instantiated as bias-only lenses without the translator weight matrix \( A_\ell \) due to the simplicity of this toy model. The resulting lenses provide the necessary components to compute the state vectors \( \ket{\Psi^0} \), \( \ket{\Psi^1} \), and \( \ket{\Psi^2} \) for the Sentihood corpus, allowing us to apply \framework to the model's intermediate decision process.

\textbf{Amazon Books} The Amazon Books corpus is a subset of the broader collection of Amazon Review Data~\cite{Ni2019Justifying}. The dataset contains over 51 million product purchase interaction between Amazon users, referencing over 3 million distinct books. Each interaction is further characterized by a product rating and timestamp; and each book is annotated with its title, content categories (e.g., books, music, etc.), brand, and sales rank.

We preprocess the dataset by reducing it to the top 50,000 most active users and filter the resulting data to be 5-core. Simultaneously, we also limit the maximum number of interactions per user to 200. Finally, we narrow the task to next-item prediction by retaining only the associated book title for each interaction. The ensuing dataset consists of 42,696 temporally-ordered user interaction histories spanning 185,136 unique items.

To prepare a subject Transformer model to interpret, we create train, validation, and test splits by temporarily partitioning the data. Causal inference is performed on all but the last two, the second to last, and last interaction in the train, validation, and test sets, respectively. On this data, we train a self-attentive sequential recommendation model~\cite{Kang2018SelfAttentiveSR}. To accelerate training, we use GPT-2~\cite{Radford2019LanguageMA} to initialize item embeddings to the mean token embedding vector across all tokens present in the book's title. These embeddings are linearly compressed and augmented by trainable positional encoding vectors before entering the model's central Transformer decoder block. We train the model for 15 and 20 epochs and compare performance on the validation set, obtaining superior performance with the latter number of epochs. The final model achieves a normalized discounted cumulative gain at 10 (NDCG@10) of 0.0473 and a mean reciprocal rank at 10 (MRR@10) of 0.0422 on the test set.

Due to the large vocabulary of this model, we train attention and embedding lenses that incorporate both a learnable change-of-basis matrix and bias. With a trained sequential recommender and its associated lenses now primed, their resulting probability distributions are used to generate state vectors across the test corpus. This prepares the Amazon Books model for the construction of the Householder unitary operators and their associated Hamiltonians.

\textbf{Tiny Stories} Originally constructed to investigate the ability for small language models to produce coherent text, the Tiny Stories dataset~\cite{Eldan2023TinyStoriesHS} consists of over 2.1 million short anecdotes that predominately contain a simple vocabulary that a young child could comprehend. Using this dataset, Eldan and Li (2023) train multiple Transformer models with under 10 million parameters, obtaining performance that rivals that of significantly larger models such as GPT-2 XL.

We use the one-decoder block variant of Eldan and Li's Transformer models as a subject model for \framework. This model integrates a GPT-NEO~\cite{Black2022GPTNeo} style architecture, reduced to use 1024-dimensional embeddings and to contain only one multi-headed self-attention and MLP layer each. To prepare for training Tuned Lenses using this model, we preprocess the Tiny Stories validation set by splicing each instance at randomized locations to create input texts of various lengths. This yields a corpus of 21,990 texts that we pass through the Transformer model to obtain last-token hidden states to train an embedding and attention lens on. The combination of the pre-trained model and the trained lenses successfully yields the state vectors (\(\ket{\Psi^0}, \ket{\Psi^1}, \ket{\Psi^2} \)). In the following section, these state vectors are interpreted using the Householder transformations and Hamiltonians of the \framework framework.

\subsection{Results and Analysis}

\begin{table*}[t!]
  \centering
  \fontsize{7pt}{8.4pt}\selectfont

  \begin{subtable}{\textwidth}
    \centering
    \caption{Sentihood Results}
    \label{table_sentihood_experiments}
    \begin{tabular}{| >{\centering\arraybackslash}m{1.5cm} |
                    >{\centering\arraybackslash}m{2cm} |
                    >{\centering\arraybackslash}m{2.25cm} |
                    >{\centering\arraybackslash}m{1.5cm} |
                    >{\centering\arraybackslash}m{1.5cm} | }
      \hline
      \textbf{Layer} &
      \textbf{Unitary Similarity (P value)} &
      \textbf{Hamiltonian Similarity (P value)} &
      \textbf{Average \( \ket{\Delta \Psi} \) Magnitude (P-value)} &
      \textbf{Inter-layer Correlation (P value)} \\
      \hline \hline
      Multi-headed Attention &
      0.89 (0.0001) &
      0.95 (0.0001) &
      0.1066 (0.0099) &
      \multirow{2}{*}[-5pt]{\centering 0.4312 (0.0001)} \\
      \cline{1-4}
      Multi-Layer Perceptron &
      0.63 (0.0001) &
      0.82 (0.0001) &
      0.0890 (0.0099) &
      \\ 
      \hline
    \end{tabular}
  \end{subtable}

  \vspace{0.5cm}

  \begin{subtable}{\textwidth}
    \centering
    \caption{Amazon Books Results}
    \label{table_books_experiments}
    \begin{tabular}{| >{\centering\arraybackslash}m{1.5cm} |
                    >{\centering\arraybackslash}m{2cm} |
                    >{\centering\arraybackslash}m{2.25cm} |
                    >{\centering\arraybackslash}m{1.25cm} |
                    >{\centering\arraybackslash}m{2cm} |
                    >{\centering\arraybackslash}m{1.5cm} |
                    >{\centering\arraybackslash}m{1.0cm} |
                    >{\centering\arraybackslash}m{1.5cm} | }
      \hline
      \textbf{Layer} &
      \textbf{Unitary Similarity (P value)} &
      \textbf{Hamiltonian Similarity (P value)} &
      \textbf{Householder Clusters} &
      \textbf{Householder Cluster Cohesion (P value)} &
      \textbf{Average \( \ket{\Delta \Psi} \) Magnitude (P-value)} &
      \textbf{\( \ket{\Delta \Psi} \) Clusters} &
      \textbf{Inter-layer Correlation (P value)} \\
      \hline \hline
      Multi-headed Attention &
      \( 1 - 2.1556 \times 10^{-5}\) (0.0001) &
      \( 2.3209 \times 10^{-3}\) (0.0001) &
      35 &
      0.6527 (0.0099) &
      0.1081 (0.0099) &
      35 &
      \multirow{2}{*}[-5pt]{\centering 0.9255 (0.0001)} \\
      \cline{1-7}
      Multi-Layer Perceptron &
      \( 1 - 2.1588 \times 10^{-5}\) (0.0001) &
      \( 8.1543 \times 10^{-4}\) (0.0001) &
      25 &
      0.6815 (0.0099) &
      0.0908 (0.0099) &
      30 &
      \\ 
      \hline
    \end{tabular}
  \end{subtable}

  \vspace{0.5cm}

  \begin{subtable}{\textwidth}
    \centering
    \caption{Tiny Stories Results}
    \label{table_stories_experiments}
    \begin{tabular}{| >{\centering\arraybackslash}m{1.5cm} |
                    >{\centering\arraybackslash}m{2cm} |
                    >{\centering\arraybackslash}m{2.25cm} |
                    >{\centering\arraybackslash}m{1.25cm} |
                    >{\centering\arraybackslash}m{2cm} |
                    >{\centering\arraybackslash}m{1.5cm} |
                    >{\centering\arraybackslash}m{1.0cm} |
                    >{\centering\arraybackslash}m{1.5cm} | }
      \hline
      \textbf{Layer} &
      \textbf{Unitary Similarity (P value)} &
      \textbf{Hamiltonian Similarity (P value)} &
      \textbf{Householder Clusters} &
      \textbf{Householder Cluster Cohesion (P value)} &
      \textbf{Average \( \ket{\Delta \Psi} \) Magnitude (P-value)} &
      \textbf{\( \ket{\Delta \Psi} \) Clusters} &
      \textbf{Inter-layer Correlation (P value)} \\
      \hline \hline
      Multi-headed Attention &
      \( 1 - 7.8402 \times 10^{-5}\) (0.0001) &
      \( 1.4924 \times 10^{-2}\) (0.0001) &
      25 &
      0.0948 (0.0099) &
      0.1747 (0.0099) &
      35 &
      \multirow{2}{*}[-7pt]{\centering 0.5990 (0.0001)} \\
      \cline{1-7}
      Multi-Layer Perceptron &
      \( 1 - 7.8768 \times 10^{-5}\) (0.0001) &
      \( 1.0322 \times 10^{-2}\) (0.0001) &
      35 &
      0.0514 (0.0099) &
      0.0837 (0.0099) &
      35 &
      \\ 
      \hline
    \end{tabular}
  \end{subtable}

  \caption{A summary of the numerical results extracted using \framework across the three datasets of Sentihood (Subtable~\ref{table_sentihood_experiments}), Amazon Books (Subtable~\ref{table_books_experiments}), and Tiny Stories (Subtable~\ref{table_stories_experiments}). P values, when applicable, are provided as parentheticals.}
  \label{table_results_summary}
\end{table*}

\begin{figure*}[t]
  \centering
  \renewcommand{\arraystretch}{0.78}

  \begin{tabular}{
    >{\raggedright\arraybackslash}m{0.5cm}
    >{\centering\arraybackslash}m{0.22\textwidth}
    >{\centering\arraybackslash}m{0.22\textwidth}
    >{\centering\arraybackslash}m{0.22\textwidth}
    >{\centering\arraybackslash}m{0.22\textwidth}
  }
    \toprule
    &
    \textbf{Attention Householder} &
    \textbf{Attention \( \ket{\Delta\Psi} \)} &
    \textbf{MLP Householder} &
    \textbf{MLP \( \ket{\Delta\Psi} \)}
    \\
    \midrule

    \rotatebox{90}{\textbf{Sentihood}} &
      \includegraphics[width=\linewidth]{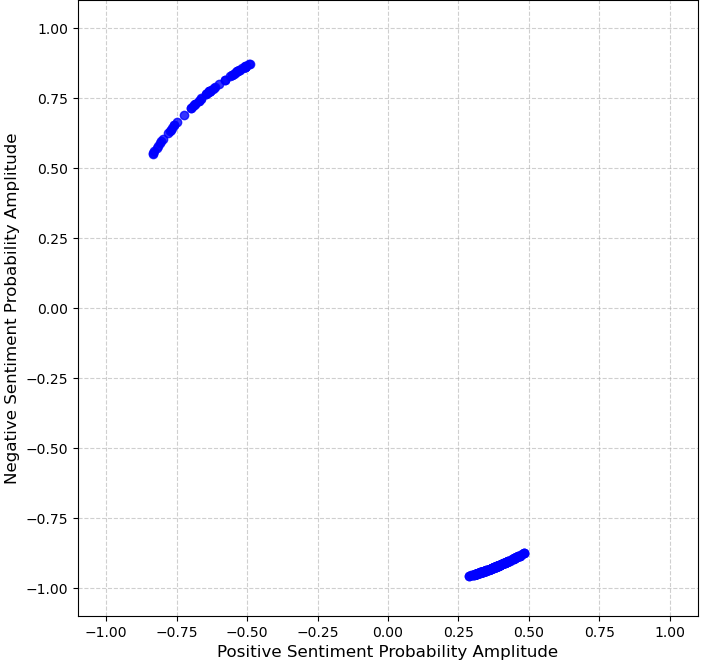} &
      \includegraphics[width=\linewidth]{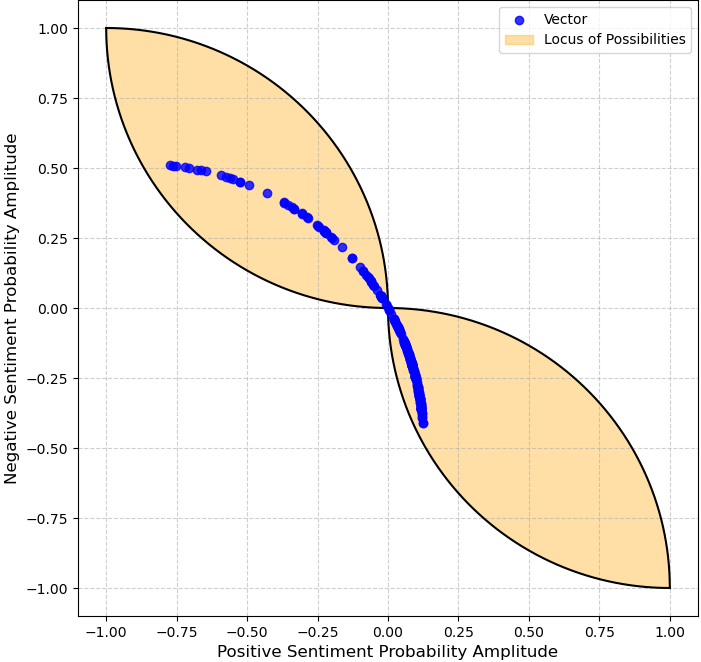} &
      \includegraphics[width=\linewidth]{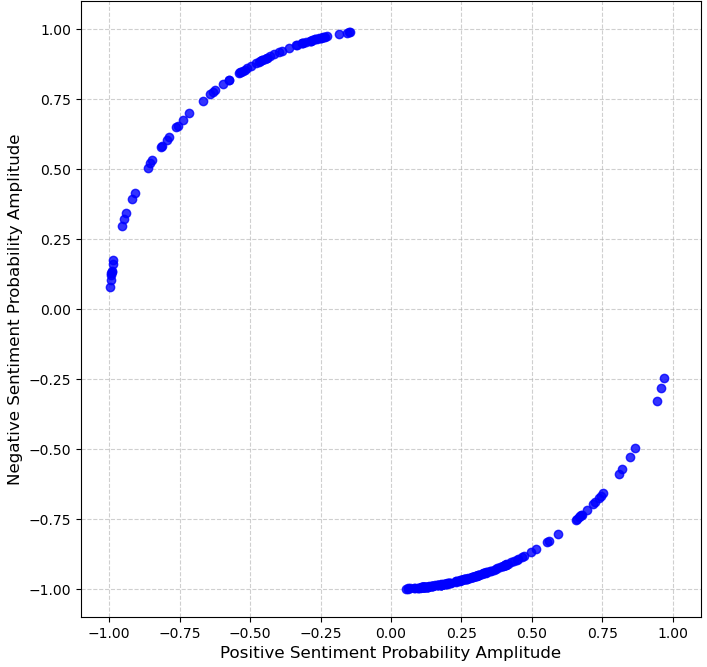} &
      \includegraphics[width=\linewidth]{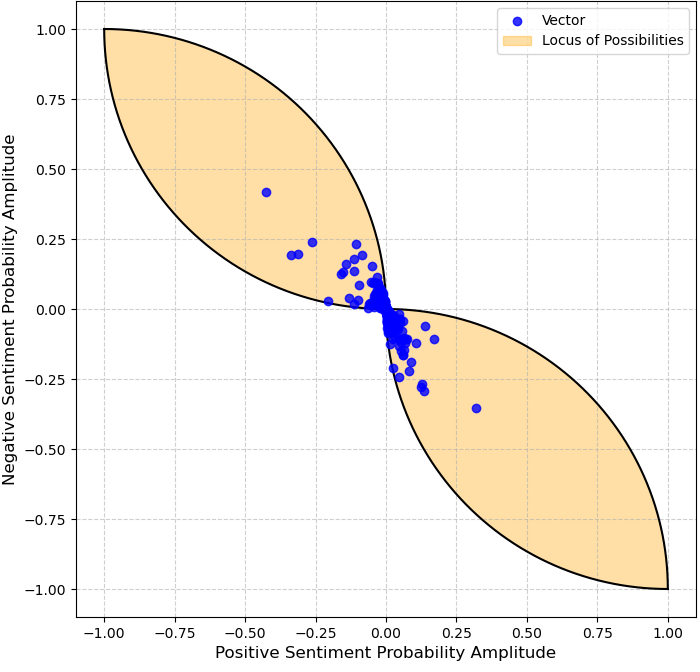}
      \\[0.4ex]

    \rotatebox{90}{\textbf{Amazon Books}} &
      \includegraphics[width=\linewidth]{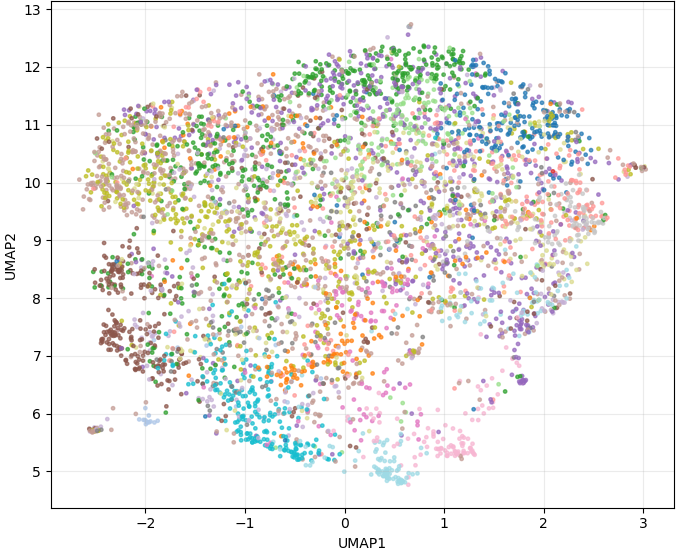} &
      \includegraphics[width=\linewidth]{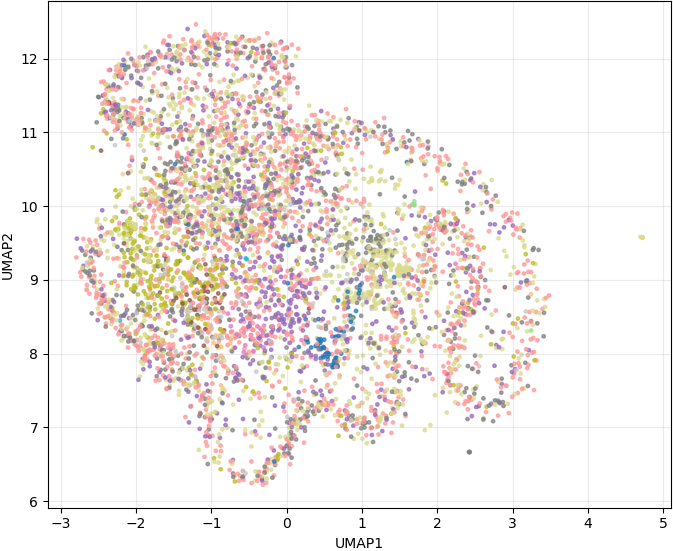} &
      \includegraphics[width=\linewidth]{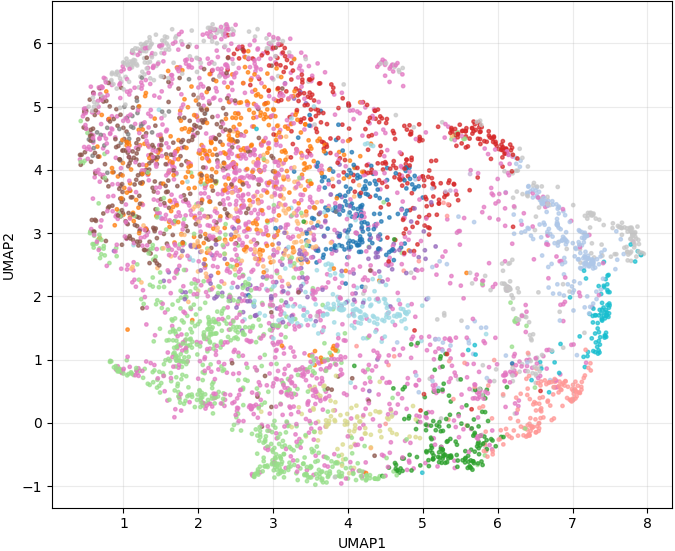} &
      \includegraphics[width=\linewidth]{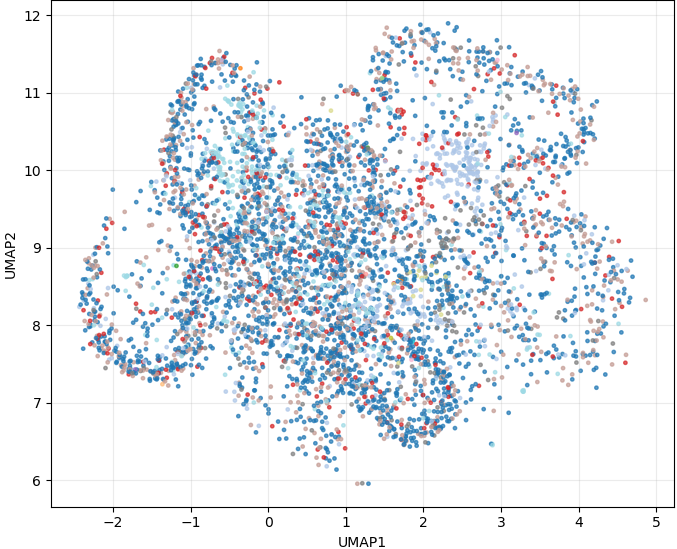}
      \\[0.4ex]

    \rotatebox{90}{\textbf{Tiny Stories}} &
      \includegraphics[width=\linewidth]{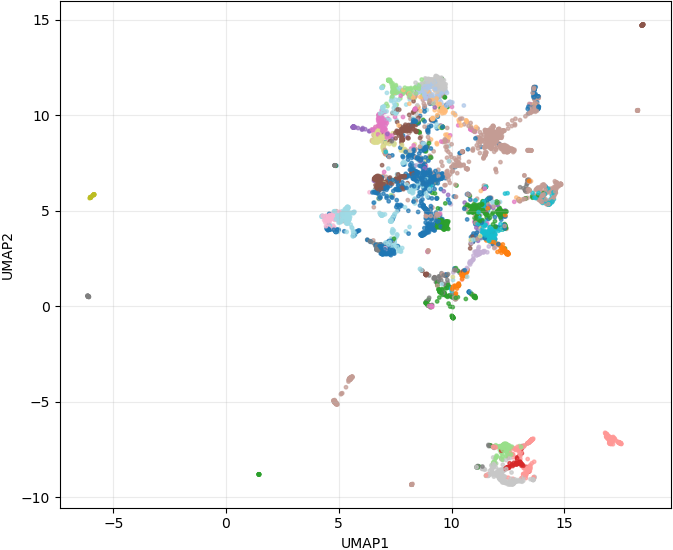} &
      \includegraphics[width=\linewidth]{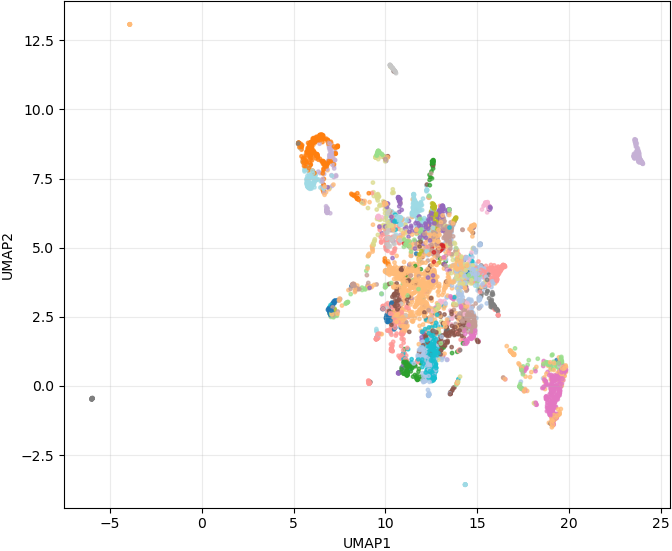} &
      \includegraphics[width=\linewidth]{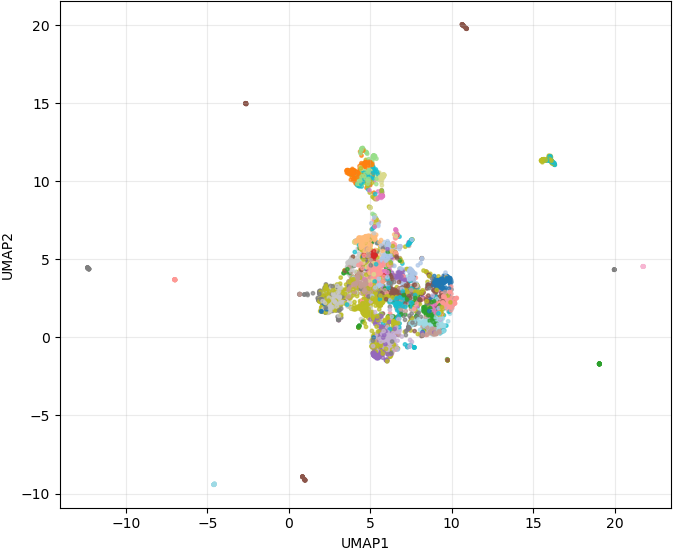} &
      \includegraphics[width=\linewidth]{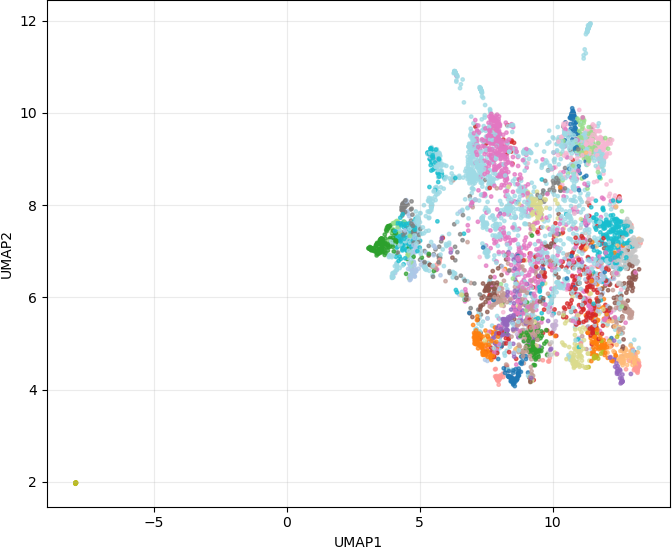}
      \\

    \bottomrule
  \end{tabular}

  \caption{Visualizations derived using \framework. Scatterplots are used for Sentihood, as are UMAP plots for the Amazon Books and Tiny Stories datasets.}
  \label{fig_dataset_plots}
\end{figure*}

We analyze each of the quantum-inspired quantities obtained by \framework from each dataset in succession. The mean pairwise Frobenius similarity across unitary operators in each dataset as a means of examining the cohesion of these operators. Likewise, a similar analysis is conducted on Hamiltonians. We then delve deeper into the Householder reflection vectors defining these two sets of operators, plotting and clustering them. To investigate inter-layer relationships, we compute distance correlation metrics between the attention and MLP layers across all three datasets' models. Finally, we apply Theorem~\ref{theorem_hamiltonian_delta_psi} to investigate \( \ket{\Delta \Psi} \) vectors. Figure~\ref{table_results_summary} summarize our numerical results. Overall, our analysis reveals that the quantum-inspired quantities extracted from all three models exhibit statistically significant non-random cohesion and structured state transitions, indicating a constrained, partially concept-driven layer update manifold.

\subsubsection{Unitary Operator Cohesion}

To search for trends in the action of the multi-head attention and MLP layers, we examine the distribution of pairwise Frobenius cosine similarities between unitary operators extracted from dataset instances. We compute the mean pairwise similarity, and evaluate its statistical significance by conducting two-sample permutation tests for each layer with the alternative hypothesis that the intra-layer unitary operator similarity is greater than that of a randomized control. Specifically, the unitary operators derived from each layer were pooled with those of the baseline before being randomly split into two groups whose intra-group similarity metrics were compared against the observed values. Add-one smoothing is further applied to account for data sparsity. Across all three datasets, the attention and MLP layers exhibit higher mean pairwise similarities between unitary operators than those of the randomly initialized control (Table~\ref{table_results_summary}). This implies that these unitary operators are not uniformly distributed; rather, the layers they stem from perform transformations that exhibit non-random consistency across inputs.

\subsubsection{Hamiltonian Cohesion}

Similarly, we test the mean pairwise similarity of the Hamiltonians with permutations tests designed analogously to those for unitary operations. This separate analysis helps confirm whether any consistency observed in transformations \( \mathbf{U^\ell} \) is rooted in a consistent generator of change \( \mathbf{H^\ell} \) across different inputs. Permutation tests designed analogously to those for unitary operations likewise produce p values of 0.0001 (Table~\ref{table_results_summary}) across all datasets and their specialized model's layers, providing convincing statistical evidence that these similarities in Hamiltonians are significantly greater than the randomized baseline.

\subsubsection{Householder Vector Analysis}
We further shed light on the unitary and Hamiltonian similarities by plotting the normalized Householder vectors characterizing the unitary operators and Hamiltonians. For the Sentihood dataset, a simple scatterplot is used as the output space is constrained to two dimensions. In the attention layer, the resulting Householder vectors form two concentrated clusters within sparse angular intervals (Figure~\ref{fig_dataset_plots}). Due to eq.~\ref{eq_state_vec_components}, Sentihood model state vectors occupy quadrant~\RomanNumeralCaps{1} in this plot. Consequently, the space of all Householder vectors is the union of arcs of the unit circle in quadrants~\RomanNumeralCaps{2} and~\RomanNumeralCaps{4}. Thus, this dense clustering suggests that not all probability updates are realized by the attention layer; instead, it relies on a constrained portion of all valid updates in its prediction trajectory. Conducting a similar analysis on the MLP layer, we find a greater spread in its Householder vectors (Figure~\ref{fig_dataset_plots}). This variance explains the comparitively lower similarity of both unitary operators and Hamiltonians in the MLP layer, as these quantum quantities are mathematically linked to the Householder vector. Accordingly, the MLP may be capable of realizing a wider variety of probability updates and state transformations than the attention layer.

In contrast with Sentihood, the Amazon Books and Tiny Stories datasets produce high-dimensional (50,000 or more dimensions) Householder vectors. Accordingly, a uniform manifold approximation and projection (UMAP)~\cite{McInnes2018UMAP} is used to visualize them in lower-dimensions while preserving their global structure. We cluster these two datasets' vectors with the k-means algorithm, optimizing k using elbow plots and coloring UMAP points to reflect their assigned cluster. Across both datasets, high-dimensional structure arises with different clusters separating into roughly delineated regions on the UMAP plots. Closer proximity between Householder vectors is observed in the MLP layer, which is particularly pronounced in the Tiny Stories dataset. To test whether these clusters may correspond to reasoning directions, we compute the mean vector of each cluster and examine their top 10 largest components. These components correspond to output units experiencing the largest probability gain over a layer. We search for alignment between these units by computing the mean pairwise cosine similarity between their embedding vectors in the original Transformer. As previous work has established the ability of these vectors to represent concepts~\cite{Mikolov2013EfficientEO}, this measures the conceptual alignment of the transformations emphasized by the layer. We then compare the mean cosine similarity of the clusters obtained via k-means against a randomized clustering approach with one-sided permutation tests to determine the former's statistical significance. In all cases, we observe p values under 0.01 (Figure~\ref{table_results_summary}), indicating that the previously observed cohesion in the transformations executed by these model's layers are at least partially concept-driven.

\subsubsection{Inter-layer Correlation}
By considering Householder vectors yielded by the attention and MLP layers from a single input, an inquiry into the relatedness of the transformations can be carried out. Specifically, we compute the distance correlation~\cite{Szekely2007MeasuringAT,Ramos2023Dcor}, a measure of dependence that captures both linear and non-linear relationships, between these pairs of Householder vectors on samples of all three datasets, finding moderately strong to strong relationships between them (Figure~\ref{table_results_summary}). Permutation tests with a null hypothesis of independence demonstrate that all values are statistically significant. Accordingly, the state transformations performed by downstream model layers is moderately dependent on those that come before it.

\subsubsection{State Vector Change Analysis}
Employing Theorem~\ref{theorem_hamiltonian_delta_psi}, we extract state vector changes and plot results. We again construct a scatterplot for the two-dimensional \( \ket{\Delta \Psi} \) vectors generated by \framework on Senithood. Consistent with the tight clustering of its Householder vectors, all attention layer \( \ket{\Delta \Psi} \) vectors lie on a limited arc. This is despite the locus of all possible changes in state vectors spanning the lens region in yellow (see Appendix A2 for proof). Intriguingly, the arc is asymmetric, suggesting the existence of a nonzero average change in state vector across the attention layer. Computing this mean vector, we find it has a magnitude of 0.1066: one that is found to be statistically significant following a permutation test against a randomized control (Figure~\ref{table_results_summary}). This implies that this attention layer is biased despite being trained on a balanced corpus.

A similar investigation on the MLP layer finds that its vectors \( \ket{\Delta \Psi} \) cluster more tightly around the origin (Figure~\ref{fig_dataset_plots}). Although the MLP layer's Householder vectors imply that it modifies state vectors along a more diverse collection of directions than the attention layer, it contributes less to the Transformer's final predictions, as corroborated by the reduced mean \( \ket{\Delta \Psi} \) magnitude (0.0890). This agrees with previous literature showing that earlier layers have a larger impact on downstream outcomes~\cite{Sajjad2020OnTE}.

Turning to the Amazon Books and Tiny Stories datasets, their \( \ket{\Delta \Psi} \) vectors are visualized via UMAP plots. From the Amazon Books corpus, diffuse, contiguous manifolds emerge. The largest clusters dominate,accounting for 53.28\% and 66.62\% of the attention and MLP layers' vectors, respectively. However, sparse pockets emerge between these clusters, illustrating that as in Sentihood not all valid \( \ket{\Delta \Psi} \) vectors are realized by the model. Contrastingly, the Tiny Stories' \( \ket{\Delta \Psi} \) vectors cluster far more densely. However, sporadic 'islands' of vectors form away from the central grouping of vectors. Therefore, most transformations inside this Transformer are similar, with a handful of inputs triggering particularly distinct updates.

While these UMAP plots provide a qualitative view, a detailed analysis requires examining central tendency. Accordingly, we compute the average \( \ket{\Delta \Psi} \) vector across all layers of the Amazon Books and Tiny Stories Transformers. We find all layers exhibit a statistically significant nonzero bias, indicating a consistent directional change to their input states. This is most pronounced in the Tiny Stories attention layer, whose average \( \ket{\Delta \Psi} \) has an L2 norm of 0.1747. We hypothesize this may relate to the prevalence of tokens in the training corpus, and test this by extracting the top 10 token-basis components of the mean \( \ket{\Delta \Psi} \). However, this is largely not the case, as uncommon tokens such as ' Thinking' and 'Value' appear in the top 5. This suggests that although final Transformer outputs align with a desired statistical distribution, intermediate layers may retain misaligned representations that are resolved by subsequent components.

\section{Limitations and Open Research Directions}
\framework provides a novel framework for interpreting Transformers from the perspective of quantum mechanics. However, as is common with interdisciplinary work, the current perspective raises multiple intriguing challenges and open questions that warrant future investigation. We acknowledge these considerations here to guide future research.

A significant difference between QM and Transformers is that QM and its associated operators are structurally linear~\cite{susskind2013quantum}, whereas Transformers derive many of their capabilities from their nonlinear components and activation functions~\cite{Agarap2018DeepLU,Bridle1989,Hendrycks2016GaussianEL,Ramachandran2017SwishAS}. This challenge of nonlinearities has been previously observed in work on quantum machine learning algorithms~\cite{Li2022QuantumSN}. In this paper, we reconcile these distinctions by conducting our initial analysis with \framework at a level of abstraction above intra-layer processes. Nevertheless, comprehensively conquering the nonlinearities involved and expanding \framework to model the intra-layer processes of MLPs and Attention blocks remain open challenges.

Furthermore, we note that the experimental inferences that emerged from the proof of concept of \framework may not generalize to other Transformers due to the diversity of model target tasks and sizes. However, the technique introduced by \framework is flexible, allowing it to in principle be extended to pretrained Transformers across scales and task specifics. We intend to continue to explore this area with supplemental studies on the capabilities of \framework.

In light of these gaps and opportunities, we present this paper as a foundational exploration into the inter-applicability of these disciplines. We hope that the mathematical language introduced in this work lays the groundwork for future studies to lengthen ties between Transformers and QM that lead to a unique understanding and analyzes of the former.

\section{Conclusion}
In this paper, we introduce \framework, a framework that charts a new path towards a quantum-inspired perspective of Transformers. By identifying conceptual similarities between QM and transformers, we developed a set of analogs for the core postulates of quantum mechanics and employed them to recast the standard forward pass through a language Transformer. Specifically, we exhibit that latent activations and hidden layers can be formally mapped to state vectors and unitary operators, respectively, through a proof-of-concept demonstration. This contribution opens several promising research directions for a quantum-based theoretical understanding of Transformers. Our work could be extended to integrate further components of quantum mechanics into this formalization, including entanglement to explore associations between layers and the general uncertainty principle for describing inherent inference tradeoffs, among others. A particularly noteworthy consequence of this research is the possibility to explore whether these analogies could be used to create new metrics for interpreting and evaluating Transformers. For instance, studies could adapt quantum information-theoretic measures to gauge the influence of inputs on generative models outputs, offering an alternative viewpoint on contextual dependence in Transformer inference. We intend to continue to explore this area by scaling the implementation of \framework to larger model sizes and investigating methods to address the critical nonlinearities in Transformers through the lenses of quantum channels and nonlinear Schrödinger equations.

\clearpage

\appendix
\section{Appendix}
\subsection{Proof of Theorem 1: Deriving Changes in State Vectors from Hamiltonians}

\begin{proof}
    The change \( \ket{ \Delta \Psi^\ell} \) can be written in terms of input and output state vectors of the layer \(\ell\):
    \[
        \ket{ \Delta \Psi^\ell} = \ket{\Psi^{\ell+1}} - \ket{\Psi^\ell}.
    \]
    Using eq.~\ref{eq_general_unitary_layer} we rewrite \( \ket{\Psi^{\ell+1}} \) in terms of \( \ket{\Psi^\ell} \):
    \[
        \ket{ \Delta \Psi^\ell} = \mathbf{U}^\ell \ket{\Psi^{\ell}} - \ket{\Psi^\ell}.
    \]
    Substituting the results of eqs.~\ref{eq_Hamiltonian_def} and~\ref{eq_Phi_energy_eigenvalue_decomp} we find:
    \[
        \ket{ \Delta \Psi^\ell} = \left [ \exp \left ( {-i\alpha \mathbf{H}^\ell} \right ) \sum_{j} k_j\ket{E_j^\ell} \right ] - \sum_{j} k_j\ket{E_j\ell}.
    \]
    Applying the Taylor series expansion of the matrix exponential and reorganizing the summations we obtain:
    \[
        \ket{ \Delta \Psi^\ell} = \left [ \sum_{m = 0}^\infty \sum_{j} { \frac{ \left ( -i\alpha \mathbf{H}^\ell \right )^m }{m!}} k_j\ket{E_j^\ell} \right ] - \sum_{j} c_j\ket{E_j^\ell}.
    \]
    Since \( \ket{E_j^\ell} \) is an eigenvector of \( \mathbf{H}^\ell \), we have \( \left ( \mathbf{H}^\ell \right )^m \ket{E_j^\ell} = (E_j)^m\ket{E_j^\ell} \). This allows us to put the above equation in the following form:
    \[
        \ket{ \Delta \Psi^\ell} = \left [ \sum_{m = 0}^\infty \sum_{j} { \frac{ \left ( -i\alpha E_j \right )^m }{m!}} k_j\ket{E_j^\ell} \right ] - \sum_{j} k_j\ket{E_j^\ell}.
    \]
    We now recondense the first summation by applying the earlier Taylor series relationship in reverse:
    \[
        \ket{ \Delta \Psi^\ell} =  \sum_j k_j e^{ -i\alpha E_j } \ket{E_j^\ell} - \sum_{j} k_j\ket{E_j^\ell}.
    \]
    Factoring the right side of the expression into a simpler form, we procure a simplified equation for \( \ket{ \Delta \Psi^\ell} \):
    \[
        \ket{ \Delta \Psi^\ell} =  \sum_j k_j \left ( e^{ -i\alpha E_j } - 1 \right ) \ket{E_j^\ell}.
    \]
    This satisfies the conclusion of the theorem and we are done. 
\end{proof}

\subsection{The Locus of State Vectors Updates}

We begin with observation that the postulate that establishes state vectors for Transformers constrains them to posses only positive real components in \( \mathcal{C} \). In particular, by eq.~\ref{eq_state_vec_components}, any state vector \( \ket{\Psi^\ell} \) lies on the surface of the unit \( N \)-dimensional hypersphere in the positive octant. The \(k\)-th component of \( \ket{\Psi^\ell} \), \( \ket{\Psi^\ell}_k \), is given by
\[
    \ket{\Psi^\ell}_k = \begin{cases}
                            \cos(\theta_{k})\prod_i^{k-1}\sin(\theta_i) & \text{ if } 1 \leq k \leq N-1 \\
                            \prod_i^{N - 1}\sin(\theta_i) & \text{ if } k = N
                        \end{cases}
\]
where \(\theta_1, \theta_2, \theta_3, \dots, \theta_N \in [0, \frac{\pi}{2}] \). Similarly, the updated state vector produced by the \(\ell\)-th layer, \( \ket{\Psi^{\ell + 1}} \) can be parameterized by \(  \phi_1, \phi_2, \phi_3, \dots, \phi_N \in [0, \frac{\pi}{2}] \):
\[
     \ket{\Psi^{\ell + 1}}_k = \begin{cases}
                                    \cos(\phi_{k}) \prod_i^{k-1} \sin(\phi_i) & \text{ if } 1 \leq k\leq N-1 \\
                                    \prod_i^{N - 1}\sin(\phi_i) & \text{ if } k = N
                                \end{cases}
\]
Thus, a vector \(\ket{\Delta \Psi^{\ell}}\) is an element of the locus of difference vectors \( \mathcal{S} \) if it satisfies the relation \( \ket{\Delta \Psi^{\ell}} = \ket{\Psi^{\ell + 1}} - \ket{\Psi^{\ell}}\), or its more descriptive equivalent:
\[
\begin{split}
& \ket{\Delta \Psi^{\ell}}_k = \\
&    \begin{cases}
            \cos(\phi_k) \prod_{i}^{k-1} \sin(\phi_i) - \cos(\theta_k)\prod_{i}^{k-1} \sin(\theta_i)
        & \text{ if } 1 \leq k\leq N-1 \\[1.5ex]
        \prod_{i}^{N - 1}\sin(\phi_i) - \prod_{i}^{N - 1}\sin(\theta_i) & \text{ if } k = N
    \end{cases},
\end{split}
\]
where each \( \theta_i, \phi_i \in [0, \frac{\pi}{2}] \). For the case where \(N = 2\), as in the Sentihood model, the restriction on \(\ket{\Delta \Psi^{\ell}}_k\) reduces to
\[
    \ket{\Delta \Psi^{\ell}}_k = \begin{cases}
                                    \cos(\phi) - \cos(\theta)  & \text{ if } k = 1 \\
                                    \sin(\phi) - \sin(\theta) & \text{ if } k = 2
                                \end{cases}
\]
Rewriting \( \ket{\Delta \Psi^{\ell}} \) in column form and applying the sum-to-product trigonometric identities we obtain:
\[
    \ket{\Delta \Psi^{\ell}} = \begin{pmatrix}
                                    -2\sin(\frac{\phi - \theta}{2})\sin(\frac{\phi + \theta}{2}) \\
                                    2\sin(\frac{\phi - \theta}{2})\cos(\frac{\phi + \theta}{2})
                                \end{pmatrix}.
\]
We define \( \delta = \phi - \theta \) and \( \epsilon = \phi + \theta \) and simplify:
\[
    \ket{\Delta \Psi^{\ell}} = 2\sin(\frac{\delta}{2})\begin{pmatrix}
                                    -\sin(\frac{\epsilon}{2}) \\
                                    \cos(\frac{\epsilon}{2})
                                \end{pmatrix} \quad \text{where } \delta \in [-\frac{\pi}{2}, \frac{\pi}{2}] \text{ and } \epsilon \in [0, \pi]
\]
While this is a complete mathematical description of the elements of \( \mathcal{S} \), a more intuitive understanding can be acquired by examining extreme cases. Consider fixing \( \theta = 0 \) while allowing \( \delta \) to vary. This yields the arc
\[
    \begin{pmatrix}
        \cos(\phi) - 1 \\
        \sin(\phi) - \sin(\theta)
    \end{pmatrix}
\]
which is equivalent to the quarter circle traced by the graph of \( (x + 1)^2 + y^2 = 1\) from \( (0, 0) \) to \( (-1, 1) \). Likewise, if we fix \( \phi = \frac{\pi}{2} \), then we obtain the arc formed by \( x^2 + (y - 1)^2 = 1\) between the same two points. These two quarter-circles bound the locus \( \mathcal{S} \) in quadrant~\RomanNumeralCaps{2}; the intersection of these disks is a \( \mathcal{R} \) comprising half of \( S \), with
\[
    \mathcal{R} = \{ (a, b): (a + 1)^2 + b^2 \leq 1 \text{ and } a^2 + (b + 1)^2 \leq 1 \}.
\]
Continuing this process by separately fixing \( \theta = \frac{\pi}{2} \) and \( \phi = 0 \) yields a similar lobe \( \mathcal{T} \) in quadrant~\RomanNumeralCaps{4}, given by:
\[
    \mathcal{T} = \{ (a, b): (a - 1)^2 + b^2 \leq 1 \text{ and } a^2 + (b - 1)^2 \leq 1 \}.
\]
Therefore, the complete locus \( \mathcal{S} \) is the union of these two components:
\[
    \mathcal{S} = \mathcal{R} \cup \mathcal{T}.
\]
This is the locus \( \mathcal{S} \) depicted in the Sentihood Attention and MLP \( \ket{\Delta \Psi} \) plots (Figure~\ref{fig_dataset_plots}).

\bibliographystyle{named}
\bibliography{references}

\end{document}